\newtheorem{corollary}{Corollary}
\newtheorem{theorem}{Theorem}
\newtheorem{lemma}{Lemma}
\def\indep{\perp\!\!\!\perp}
\def\R{\mathbb{R}}
\def\E{\mathbb{E}}
\def\P{\mathbb{P}}
\def\cI{\mathcal{I}}
\def\cJ{\mathcal{J}}
\newcommand{\eqd}{\stackrel{\textnormal{d}}{=}}
\newcommand{\cube}{\mbox{\mancube}}
\newcommand{\mathbbm}[1]{\text{\usefont{U}{bbm}{m}{n}#1}}
\def\ind#1{\mathbbm{1}\left\{#1\right\}}
\title{Wavefront Randomization Improves Deconvolution}
\author{Amit Kohli\textsuperscript{*}, Anastasios N. Angelopoulos\thanks{equal contribution} , Laura Waller}
\date{University of California, Berkeley \\~\\
\today}
\begin{document}
\maketitle

\begin{abstract}
   The performance of an imaging system is limited by optical aberrations, which cause blurriness in the resulting image.
   Digital correction techniques, such as deconvolution, have limited ability to correct the blur, since some spatial frequencies in the scene are not measured adequately (i.e., `zeros' of the system transfer function). 
   We prove that  the addition of a random mask to an imaging system removes its dependence
   on aberrations, reducing the likelihood of zeros in the transfer function and consequently decreasing the sensitivity to
   noise during deconvolution.
   In simulation, we show that this strategy improves image quality over a range of aberration types, aberration strengths, and signal-to-noise ratios.
\end{abstract}


\section{INTRODUCTION}
\label{sec:intro}  

Aberrations describe the deviations of an imaging system from ideal, diffraction-limited imaging. 
Even well-designed optics have inherent aberrations; they are often the limiting factor in optical space-bandwidth product.  Correcting aberrations usually involves complex sequences of optical elements---like those in a microscope objective---to achieve diffraction-limited imaging across the target field-of-view (FoV). Alternatively, adaptive optical systems use a programmable phase modulator in a closed-loop to dynamically correct aberrated wavefronts in real-time. Both of these hardware-based solutions are generally expensive and bulky. 
For a simpler and less expensive alternative, many users turn to computational post-processing---for example, deconvolution, whereby images captured with poorly-corrected optics are digitally processed to remove aberration effects. Deconvolution requires knowing or measuring the system point spread function (PSF), then implementing an image reconstruction algorithm to deconvolve it from the captured image.
However, deconvolution is limited in use as it often fails in the case of low-quality and/or noisy images.

This manuscript explores a new computational imaging approach to correcting aberrations via a simple and inexpensive hardware modification combined with standard deconvolution. Given an aberrated imaging system, we show that \emph{wavefront randomization} (e.g. by inserting a random phase mask in the pupil plane) can result in improved deconvolution (see Fig.~\ref{fig:teaser}). It may be surprising to think that adding a random scattering element to the system could improve image quality. Indeed, the captured images from the randomized system will initially look worse than the original aberrated images.
However, randomization better encodes the scene's spatial frequency information---i.e., it improves its transfer function, resulting in a higher-quality image after deconvolution. Wavefront randomization can be implemented by a simple phase mask or diffuser in the pupil plane, whose PSF can be experimentally measured for use in deconvolution.  

Our approach is enabled by a new discovery: adding a uniformly random phase mask to an imaging system makes it invariant to its initial aberrations. 
When a system is aberrated, it induces structured wavefront distortions that often cause zeros in the system's transfer function. 
However, when the wavefront distortions are uniformly randomized, they are no longer correlated with the original aberrations and thus lose their structure, resulting in a transfer function with no zeros. 
Consequently, this random but improved transfer function makes deconvolution more tolerant to noise, which would have normally overcome the signal null frequencies.

Aberration correction methods using phase masks in conjunction with computation have been explored before and can be divided into nonrandom designs~\cite{Frieden:68, Cathey:84, harvey2011control} which optimize for a specific aberration type, and random designs which trade off optimality for robustness to unknown aberrations. These random methods have performed aberration correction in the context of extending the depth of field~\cite{Zalevsky_2008}, correcting sample-induced aberrations~\cite{meitav2016point}, heuristic transfer function design~\cite{STOSSEL1995156}, stellar interferometry with low quality optics~\cite{DAINTY1973129}, and sparse aperture imaging~\cite{Miller:07}.
This manuscript provides rigorous statistical analyses for randomized imaging under \emph{arbitrary, unknown} aberrations, accompanied by comprehensive simulations of transfer function distributions/means and deconvolutions against aberration type and noise.
The theorems we prove are new, and give \emph{exact} analytical expressions for the random transfer functions under consideration.

\begin{figure}[t]
   \centering
   \includegraphics[width=0.8\textwidth]{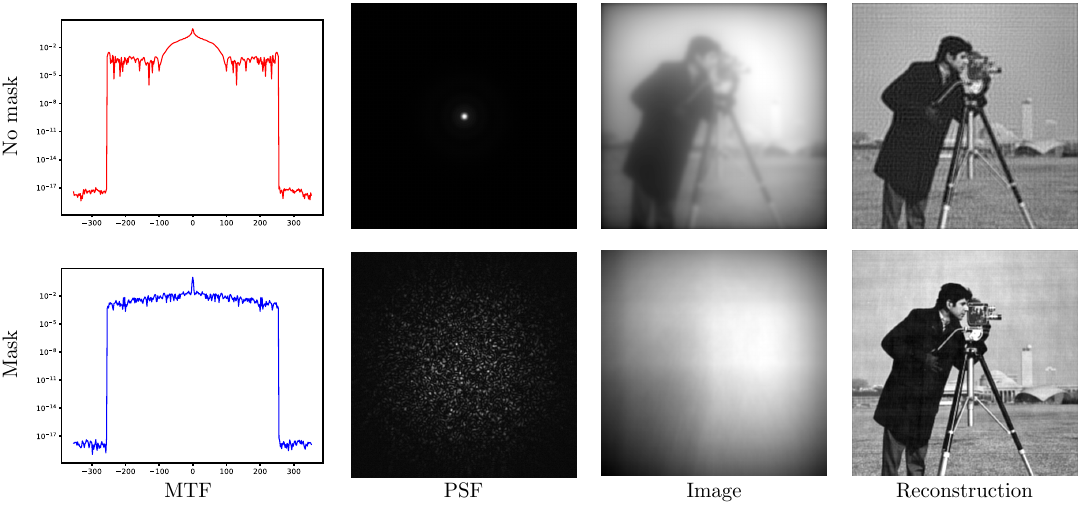}
   \caption{Simulation of a spherically aberrated imaging system with and without wavefront randomization. 
   With no randomization (top row) the system has an MTF with severe nulls and a large blob-like point spread function (PSF). 
   The image is a blurry, noisy version of the scene and the deconvolved image has noise-induced patterned artifacts.
   With a random mask (bottom row), the wavefront is randomized, causing the modulation transfer function (MTF) to become flatter, with no nulls. The corresponding point spread function (PSF) is a speckle pattern with small features. 
   The image is random, but the deconvolved image is much closer to the ground truth. Noise is white, additive Gaussian.}
   \label{fig:teaser}
\end{figure}

\section{Background}
\label{sec:background}

For the purpose of convenience, although imaging systems are generally 2-dimensional, the theory in this manuscript is done in a 1-dimensional discrete-time setting.
All optical fields will be described as discrete, periodic complex-valued sequences with period $N$. 
This mathematical setup is very common in the development of imaging algorithms~\cite{paxman1992joint}.
All sequences will henceforth be defined on $\{0, \ldots, N-1\}$, with the understanding that they can be naturally extended to $\mathbb{Z}$ by periodicity.

The pupil function
\begin{equation}
   \label{eq:pupil-def}
   P_n = A_ne^{i\phi_n}
\end {equation}
   is composed of two real-valued sequences: the transmittance of the system aperture $A_n$, and the deviation $\phi_n$ from an ideal wavefront.
   These $\phi_n$ are the undesirable, unknown aberrations that we wish to correct.
   For the remainder of this manuscript, we will set $A_n = 0$ for $n = \lfloor \frac{N}{2}\rfloor, \dots, N-1$ and $A_n = 1$ otherwise.
   The modulation transfer function (MTF) $H_n$ is the real-valued magnitude of the discrete autocorrelation of the pupil function, 
\begin{equation}
    \label{eq:mtf-def}
    H_n = \bigg|\frac{1}{||P_n||^2}\sum_{m = 0} ^ {N-1} P_mP^*_{m-n}\bigg| = \bigg|\frac{1}{||P_n||^2}\sum_{m = 0} ^ {N-1} A_me^{i\phi_m}A_{m-n}e^{-i\phi_{m-n}}\bigg|.
\end{equation}
The MTF describes the amount of each spatially frequency in the scene captured by the system.
Clearly, the aberrations $\phi_n$ affect the value of the MTF.
In fact, an application of the Cauchy-Schwarz inequality to~\eqref{eq:mtf-def} gives a pointwise upper bound on the MTF:
\begin{equation}
    H_n \leq 
    \bigg|\frac{1}{||P_n||^2} \sum_{m = 0} ^ {N-1} A_mA_{m-n}\bigg| = \bigg| 1 -  \frac{|n|}{\lfloor N/2\rfloor} \bigg|.
    \label{eq:mtf-cs}
\end{equation}
Equality occurs when $\phi_n = 0$, or, in other words, \emph{aberrations can only worsen a diffraction-limited system}.~\cite{Frieden:68}

Our main tool will be the use of a phase mask to change the aberration profile.
If $W_n$ is a sequence representing the phase profile of a phase mask, the pupil becomes
\begin{equation}
   \label{eq:pupil-masked}
   \tilde{P}_n = A_ne^{i(\phi_n + W_n)},
\end {equation}
and the resulting MTF follows from autocorrelation as per~\eqref{eq:mtf-def} (see Appendix~\ref{appendix:lemmas}).
This paper is about choosing $W_n$ strategically in order to remove the dependence of the transfer function on the aberrations---an outcome we call \emph{aberration invariance}.

The benefit of aberration invariance is to improve the effectiveness of digital post-processing.
Although theoretically exact recovery is possible for any positive transfer function, in reality, any zeros or near-zeros in the MTF are highly susceptible to noise.
These noisy frequencies are boosted by the inverse filter, leading to noticeable artifacts in the deconvolution.
The top row of Fig.~\ref{fig:teaser} shows an example of a spherical aberration that pushes the MTF below the noise floor at several null frequencies, leading to a systematic, patterned artifacts corresponding to those frequencies.
Achieving aberration invariance will allow us to provably avoid this type of deconvolution artifact.

\section{Theory}
\label{sec:theory}
We now describe how wavefront randomization via random masks provides aberration invariance. 

\subsection{Random Masks}
\label{subsec:random-masks}
Our main discovery is that aberration independence can be provably achieved by wavefront randomization, \emph{without any knowledge about the aberrations whatsoever}. 
Herein we do so by inserting a random phase mask into the pupil plane with phase profile $W_n$, which is a real-valued random variable whose distribution we can design.

The theoretical results in this section provide characterizations of the transformed MTF under two different models for $W_n$: in the first and simplest model, $W_n \sim \mathrm{Unif(0,2\pi)}$. 
In the second, $W_n \sim \pi \mathrm{Bern}(0.5)$.
In both cases, the theory will show aberration independence arising from inserting the mask.
Figure~\ref{fig:theory} shows accompanying simulations of the MTFs and supports the theoretical claims.

\subsubsection{Uniform mask}
In this section, we consider the case where $W_n \sim \mathrm{Unif(0, 2\pi)}$.
It will be immediately clear that the resulting pupil function is entirely independent of aberrations and has a known and exact distribution.

\begin{theorem}[Aberration invariance: uniform mask]
    \label{thm:uniform-mtf}
    Consider a masked pupil function $\tilde{P}_n$ as in~\eqref{eq:pupil-masked} with arbitrary aberrations $\phi_n$ and $W_n \overset{i.i.d.}{\sim} \mathrm{Unif(0, 2\pi)}$. Then,
    \begin{equation}
        \tilde{P}_n \eqd A_ne^{iW_n}
    \end{equation}
    and
    \begin{equation}
        H_n \eqd \frac{1}{\lfloor N/2 \rfloor}\left| \mathbbm{1}_{C(N,n)}^\top e^{i\Delta_n(W)} \right|, 
    \end{equation}
    where $C(N,n) = \lfloor N/2 \rfloor - n$ and the $\Delta_n : \R^N \to \R^{C(N,n)}$ function computes the vector $\Delta_n(w) = (w_n - w_0, \ldots, w_j - w_{j-n}, \ldots, w_{N-1} - w_{N-n-1})$.
\end{theorem}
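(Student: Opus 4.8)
The plan is to prove the distributional identity $\tilde P_n \eqd A_n e^{iW_n}$ first, and then exploit it to reduce the MTF computation to the aberration-free case $\phi\equiv 0$, after which the statement becomes an explicit (if bookkeeping-heavy) counting argument. The conceptual content lives entirely in the first reduction; the second equation is then a deterministic identity applied under the reduced law.

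First I would establish $\tilde P_n \eqd A_n e^{iW_n}$ as an equality of the joint laws of the two length-$N$ sequences. The key fact is that the uniform law on the circle is invariant under deterministic rotation: for each fixed $\phi_n$, the wrapped sum $\phi_n + W_n \ (\mathrm{mod}\ 2\pi)$ is again $\mathrm{Unif}(0,2\pi)$. Since the $W_n$ are i.i.d.\ and the shifts act coordinatewise, the vector $(\phi_n + W_n \bmod 2\pi)_n$ has the same joint law as $(W_n)_n$. Because $\tilde P_n = A_n e^{i(\phi_n + W_n)}$ is the same coordinatewise function of $(\phi_n + W_n)_n$ that $A_n e^{iW_n}$ is of $(W_n)_n$, the sequences are equal in distribution, and hence so is every measurable functional of them---in particular the MTF.

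Next, since $H_n$ is a functional of the pupil sequence, the previous step lets me compute its law as though $\phi\equiv 0$. Substituting $A_m e^{iW_m}$ into~\eqref{eq:mtf-def}, with normalization $\|\tilde P\|^2 = \sum_m A_m^2 = \lfloor N/2\rfloor$ (there are exactly $\lfloor N/2\rfloor$ ones among the $A_m\in\{0,1\}$), gives
\[
   H_n \eqd \frac{1}{\lfloor N/2\rfloor}\left| \sum_{m=0}^{N-1} A_m A_{m-n}\, e^{i(W_m - W_{m-n})} \right|.
\]
The remaining task is to identify the support of the summand. Writing $K=\lfloor N/2\rfloor$ and $A_k=\ind{0\le k\le K-1}$ (indices mod $N$), the product $A_m A_{m-n}$ is nonzero exactly when both $m$ and $m-n$ lie in $\{0,\dots,K-1\}$. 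For $0\le n<K$ this forces $m\in\{n,\dots,K-1\}$: the cases $m<n$ wrap around to indices in $\{K,\dots,N-1\}$ where $A$ vanishes, which I would verify from $N-n>\lfloor N/2\rfloor-1$. Exactly $K-n=C(N,n)$ terms survive, each equal to one, and their exponents are precisely the entries $W_m - W_{m-n}$ comprising $\Delta_n(W)$; collecting them as $\mathbbm{1}_{C(N,n)}^\top e^{i\Delta_n(W)}$ yields the claimed form.

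I expect the main obstacle to be the index bookkeeping under periodic boundary conditions---carefully tracking the wraparound so that the count of nonvanishing terms is exactly $C(N,n)$ and the surviving exponents match the stated definition of $\Delta_n$. The distributional reduction, though it is the heart of the result, is technically routine once rotation invariance of the uniform circle law is invoked.
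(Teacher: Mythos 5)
Your proof is correct and takes essentially the same route as the paper's: both rest on the translation invariance of uniform phasors under deterministic shifts to eliminate the aberrations, followed by index bookkeeping over the aperture support to identify the $C(N,n)$ surviving terms of the autocorrelation. If anything, your version is slightly more careful than the paper's, which invokes its single-variable translation-invariance lemma directly inside the sum of dependent phase differences (``two applications'' of the lemma), whereas you first establish equality of the joint law of the whole pupil sequence and then push it through the MTF as a measurable functional---the cleaner way to handle that dependence.
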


Intuitively, by uniformly randomly shifting the phase at each point in the pupil wavefront, the mask makes it so that the wavefront itself is uniformly random, regardless of what the initial aberrations were. 
The proof of this theorem relies on the unique fact that uniform random phasors are invariant to constant shifts (this fact is also critical to the study of random speckle patterns~\cite{goodman2007speckle}).
Figure~\ref{fig:theory} confirms this theorem and also shows that the resulting random MTF concentrates around its mean, avoiding null frequencies with essentially probability one.


\begin{figure}[t]
   \centering
   \includegraphics[width=0.8\textwidth]{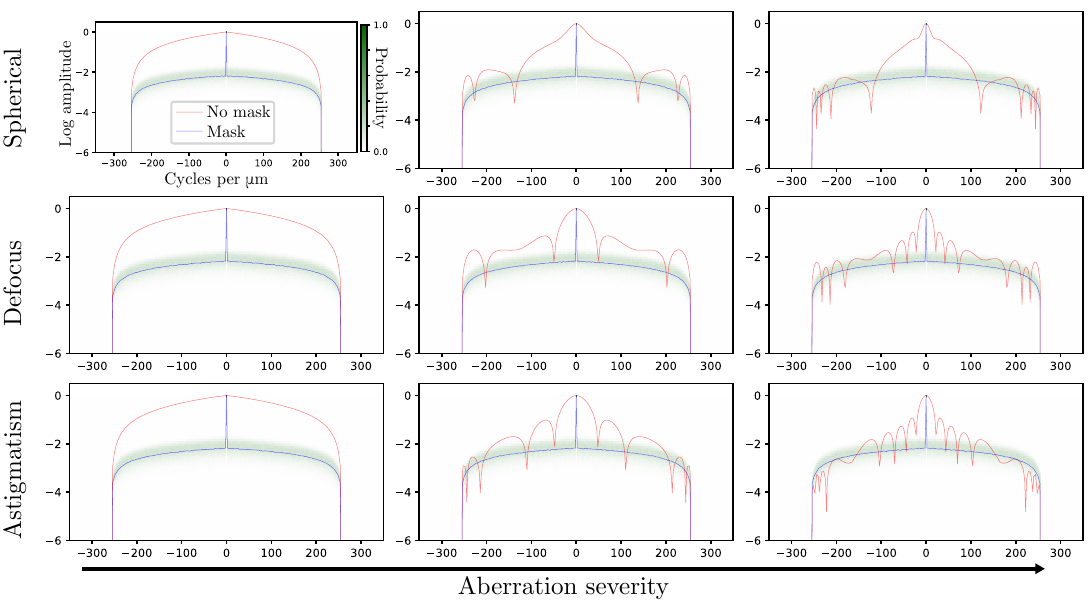}
   \caption{Simulation of MTFs with and without a uniform random mask. Each row represents a different aberration type, and each column represents a different aberration strength. 
   Within each individual plot is the MTF of the system with no mask (red), the empirical distribution of MTFs from many draws of a uniform mask (green), and the average MTF of those draws (blue).
   As expected by Theoreom~\ref{thm:uniform-mtf}, the MTF distribution and average from uniform masks do not change with aberration type or strength, whereas the MTF without a mask does so drastically.
   Also note how the MTF distribution is concentrated around the average, signifying that the MTF is reliably null-free.}
   \label{fig:theory}
\end{figure}

\subsubsection{Binary mask}
\label{subsec:binary-mask}

In this section, we consider the case where $W_n \sim \pi \mathrm{Bern}(0.5)$, i.e. $W_n$ is an (appropriately scaled) Bernoulli random variable with probability $p=0.5$.
This binary mask can be easily fabricated or represented with an adaptive element like a deformable mirror. An exact characterization of the MTF is possible, but it is aberration-dependent; nonetheless, a lower-bound on the expectation can be derived that is independent of the aberrations.
\begin{theorem}[Approximate aberration invariance: binary mask]
    \label{thm:binary-mtf}
    Consider a pupil function $\tilde{P}_n$ as in~\eqref{eq:pupil-masked} with arbitrary aberrations $\phi$ and $W_n \overset{i.i.d.}{\sim} \pi \mathrm{Bern}(0.5)$. Then,
    \begin{equation}
        \bar{H}_n \eqd \frac{\sqrt{C(N,n)}}{\lfloor N/2 \rfloor} \sqrt{1 + \frac{2a^\top U}{C(N,n)}},
    \end{equation}
    where $U$ is uniformly distributed on $\cube_n$: the $C(N,n)$-dimensional hypercube with vertices $\{-1,1\}$ along each dimension and $a$ is the $C(N,n)$-dimensional vector
    \begin{equation}
        \label{eq:a-aberration-term}
        a_{jk} = \left( \left| \cos(\phi_j - \phi_{j-n} - \phi_k + \phi_{k-n} ) \right| \right), \forall j \in \{n, \ldots, N-1\}, k \in \{j, \ldots, N-1\}.
    \end{equation}
    Furthermore, the expectation of the MTF is
    \begin{equation}
        \mathbb{E}[\bar{H}_n] = D(N,n) \sum\limits_{u \in \cube_n}\sqrt{1 + \frac{2a^\top u}{C(N,n)}} \geq   D(N,n) \sum\limits_{u \in \cube_n}\sqrt{\left(1 + \frac{2\mathbbm{1}^\top U}{C(N,n)}\right)_+},
    \end{equation}
    where $(\cdot)_+ = x\ind{x \geq 0}$ and $D(N,n) = \frac{1}{\lfloor N/2 \rfloor2^{\sqrt{C(N,n)}(C(N,n)-1)}}$.
\end{theorem}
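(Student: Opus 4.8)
The plan is to reduce the masked MTF to a Rademacher sum and then recognize its squared modulus as an affine function of a uniform sign vector. Writing $\varepsilon_m = e^{iW_m}$, the hypothesis $W_m \sim \pi\,\mathrm{Bern}(0.5)$ forces $\varepsilon_m \in \{-1,1\}$ uniformly and independently, so from~\eqref{eq:pupil-masked} and~\eqref{eq:mtf-def} the masked autocorrelation is $\sum_m A_mA_{m-n}e^{i(\phi_m-\phi_{m-n})}\varepsilon_m\varepsilon_{m-n}$. Since $A$ is the indicator of $\{0,\dots,\lfloor N/2\rfloor-1\}$ and $\|\tilde P_n\|^2=\lfloor N/2\rfloor$, this collapses to a sum of exactly $C(N,n)$ nonzero terms indexed by $j\in\{n,\dots,\lfloor N/2\rfloor-1\}$. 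First I would abbreviate $\psi_j=\phi_j-\phi_{j-n}$ and $\sigma_j=\varepsilon_j\varepsilon_{j-n}$, so that $H_n=\frac{1}{\lfloor N/2\rfloor}\big|\sum_j e^{i\psi_j}\sigma_j\big|$.

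Next I would square and expand the modulus to peel off the diagonal from the cross terms, obtaining
\[
H_n^2 = \frac{1}{\lfloor N/2\rfloor^2}\Big( C(N,n) + 2\sum_{j<k}\cos(\psi_j-\psi_k)\,\sigma_j\sigma_k\Big),
\]
where $\psi_j-\psi_k=\phi_j-\phi_{j-n}-\phi_k+\phi_{k-n}$ is precisely the argument appearing in~\eqref{eq:a-aberration-term}. The key structural observation is that the edges $j\leftrightarrow j-n$ form a forest on $\{0,\dots,\lfloor N/2\rfloor-1\}$, so the edge signs $\sigma_j$ are genuinely i.i.d. uniform, i.e. $U=(\sigma_n,\dots,\sigma_{\lfloor N/2\rfloor-1})$ is uniform on $\cube_n$. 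The distributional step is then to absorb the factor $\mathrm{sign}(\cos(\psi_j-\psi_k))$ into the sign vector using symmetry of the law of $U$ under coordinate negation, replacing each coefficient by its absolute value $a_{jk}=|\cos(\psi_j-\psi_k)|$. This yields $H_n^2\eqd\frac{C(N,n)}{\lfloor N/2\rfloor^2}\big(1+\frac{2a^\top U}{C(N,n)}\big)$, which is the claimed identity for $\bar H_n$, and averaging over the vertices $u\in\cube_n$ gives the exact, aberration-dependent expectation $\mathbb{E}[\bar H_n]=D(N,n)\sum_{u\in\cube_n}\sqrt{1+\frac{2a^\top u}{C(N,n)}}$.

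For the aberration-free lower bound I would exploit $a_{jk}=|\cos(\cdot)|\le 1$, i.e. $a\le\mathbbm{1}$ coordinatewise, and argue that shrinking a symmetric sign coordinate toward zero only raises the mean of a concave functional. Concretely, for each coordinate $a_iU_i\preceq_{\mathrm{cx}}U_i$, because $\tfrac12(\varphi(a_i)+\varphi(-a_i))\le\tfrac12(\varphi(1)+\varphi(-1))$ for every convex $\varphi$ and $a_i\in[0,1]$; independence lets these relations tensorize into $a^\top U\preceq_{\mathrm{cx}}\mathbbm{1}^\top U$. Applying the concave square-root branch then pushes $a$ up to $\mathbbm{1}$ while decreasing the expectation, and the positive part $(\cdot)_+$ enters exactly to keep the root real once the all-ones argument $1+\frac{2\mathbbm{1}^\top u}{C(N,n)}$ becomes negative, producing the stated bound $\mathbb{E}[\bar H_n]\ge D(N,n)\sum_{u\in\cube_n}\sqrt{(1+\frac{2\mathbbm{1}^\top u}{C(N,n)})_+}$.

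I expect this last step to be the main obstacle. The map $x\mapsto\sqrt{(1+\frac{2}{C(N,n)}x)_+}$ is \emph{not} globally concave---it has a convex kink where the argument crosses zero---so the convex-order comparison against $\mathbbm{1}$ cannot be invoked verbatim, and one must show that the positive-part truncation is genuinely conservative. I would handle this by pairing each vertex $u$ with its neighbor obtained by flipping a single coordinate and performing a case analysis on when the truncation activates, checking that the net contribution of each pair is monotone in that coordinate as $a_i$ is raised to $1$. A secondary point to verify carefully is the sign-absorption step: because pairwise products of independent signs are not themselves independently flippable, one must confirm that replacing $\cos$ by $|\cos|$ leaves the induced law on the cross terms equal to the uniform hypercube law on $\cube_n$ used throughout.
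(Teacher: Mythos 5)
Your proposal follows essentially the same route as the paper's own proof: reduce the mask to Rademacher signs (the paper's Lemma~\ref{lem:rademacher-phase}), square the modulus to split the diagonal term $C(N,n)$ from the cross terms, rewrite the cross terms as a linear form $2a^\top U$ with $U$ uniform on a sign hypercube and $a$ built from $|\cos(\phi_j-\phi_{j-n}-\phi_k+\phi_{k-n})|$, average over vertices for the expectation, and push $a$ to $\mathbbm{1}$ for the aberration-free lower bound; your forest argument for the i.i.d.-ness of the edge signs $\sigma_j=\varepsilon_j\varepsilon_{j-n}$ is a cleaner derivation of the paper's Lemma~\ref{lem:prod-rad-unif}. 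However, the step you dismiss as ``a secondary point to verify carefully'' is not a detail: it is a genuine gap, and it cannot be closed. The cross term $2\sum_{j<k}\cos(\phi_j-\phi_{j-n}-\phi_k+\phi_{k-n})\,\sigma_j\sigma_k$ is a quadratic form in the $C(N,n)$ independent signs, and the pair products $\sigma_j\sigma_k$ are pairwise independent but \emph{not} jointly independent: for any three indices, $(\sigma_j\sigma_k)(\sigma_k\sigma_l)(\sigma_j\sigma_l)=1$ deterministically. Hence the cross term is not equal in distribution to a linear form $\sum_{j<k}|\cos(\cdot)|\,\tau_{jk}$ in i.i.d. signs $\tau_{jk}$, and coordinate negation cannot absorb the cosine signs, since flipping $\sigma_j$ flips every pair containing $j$ at once. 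Concretely, take $\phi\equiv 0$ and $C(N,n)=3$ (e.g.\ $\lfloor N/2\rfloor=4$, $n=1$): then $\bar H_n^2=3+2(\sigma_1\sigma_2+\sigma_1\sigma_3+\sigma_2\sigma_3)$ equals $9$ with probability $1/4$ and $1$ with probability $3/4$, whereas the claimed law $3\left(1+\tfrac{2}{3}\mathbbm{1}^\top U\right)$ puts mass $1/8$ on each of $9$ and $-3$ and mass $3/8$ on each of $5$ and $1$ --- it even assigns positive probability to a negative value of a squared modulus. (The two laws share mean and second moment, which is why Corollary~\ref{thm:binary-squared-mtf-0.5} survives.) You should know that the paper's proof stumbles at exactly this point: it asserts joint independence of its pair signs $R_{j,k}$ after checking only pairwise conditionals, which does not imply joint independence. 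So your instinct to flag the step was right, but the confirmation you hoped for does not exist.

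Your second flagged obstacle --- the non-concavity of $x\mapsto\sqrt{\left(1+2x/C(N,n)\right)_+}$ --- is also real, and it equally undermines the paper's corner-minimization argument for the lower bound: once the positive part is active (and at $a=\mathbbm{1}$ it must be, since $1+2\mathbbm{1}^\top u/C(N,n)<0$ at some vertices), the claim that $a_q=1$ minimizes $\sqrt{\left(1+\gamma+2a_q/C\right)_+}+\sqrt{\left(1+\gamma-2a_q/C\right)_+}$ over $a_q\in[0,1]$ fails; for $2/C=1$ and $1+\gamma=0.1$ the minimizer is $a_q=0.1$, not $a_q=1$. In summary, your proposal reproduces the paper's strategy and inherits both of its genuine gaps: the two caveats you honestly raised are precisely where the argument --- yours and the paper's --- breaks down, and a repair would require either working with the true (dependent) law of the pair products or retreating to moment statements such as Theorem~\ref{thm:binary-squared-mtf}.
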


Above, the MTF is expressed as a function of the aberrations through the vector $a$ and the random variable $U$.
The dependence is mild and the MTF is approximately independent of the aberrations (see Appendix~\ref{appendix:binary-mask}).
Furthermore, the expectation of the MTF is lower-bounded by a quantity that is aberration invariant.

\begin{figure}[t]
    \centering
    \includegraphics[width=\textwidth]{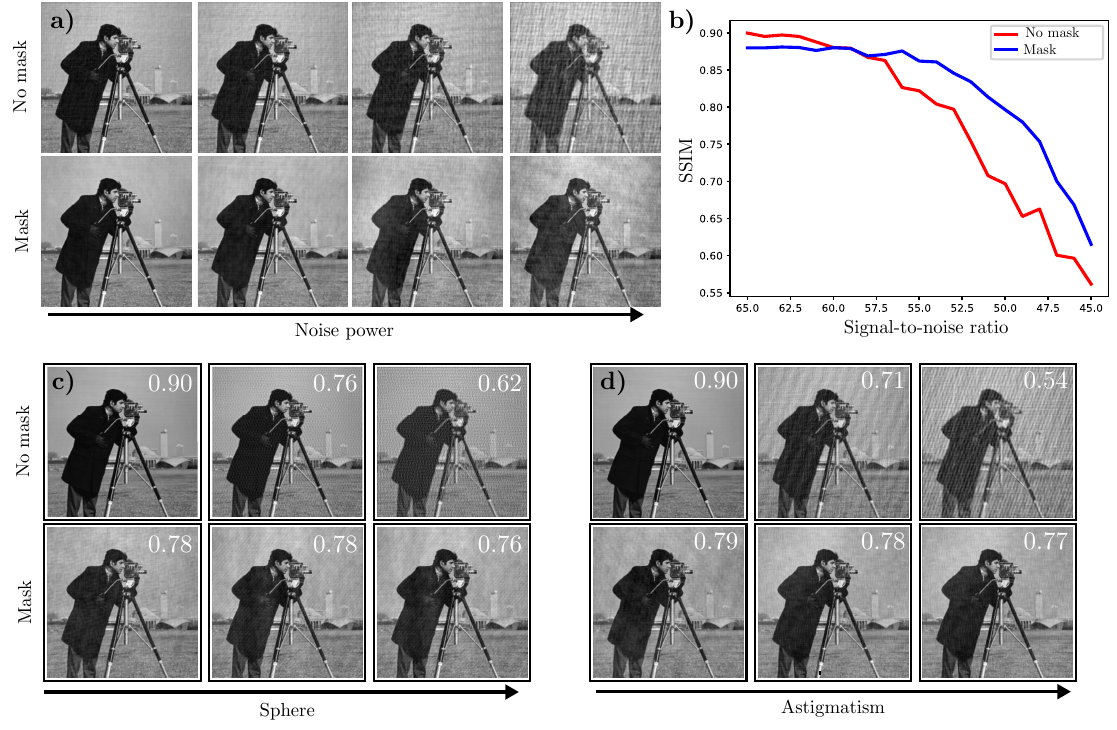}
    \caption{Simulation of image reconstructions and their SSIM scores for different noise levels and aberration strengths/types. \textbf{a)} Deconvolutions with (bottom) and without (top) a uniform random mask for increasing noise power. The masked case degrades more slowly. \textbf{b)} A plot of SSIM scores on the deconvolution from a) against SNR; the masked deconvolution degrades more gradually. \textbf{c)} Deconvolutions with and without a mask for increasing levels of spherical aberration and \textbf{d)} astigmatism. Deconvolution with the mask is less sensitive to noise for large aberrations. SSIM scores are shown on each image reconstruction.}
    \label{fig:recons}
 \end{figure}

\section{Imaging Simulations}
\label{sec:simulations}
The theoretical results in the previous section show that wavefront randomization can remove the MTF's dependence on aberrations.
Moreover, the resulting random MTFs are concentrated and rarely have nulls, making them amenable to post-processing.
It remains to perform deconvolutions to determine if this strategy actually improves image quality in the presence of noise.

To that end, we simulate imaging with and without a random mask for a variety of different conditions: aberration types, aberration strengths, and signal-to-noise ratios.
The simulation is done by first generating a pupil function with an aberration profile determined by Seidel aberration coefficients; the size of the coefficients correspond to aberration strength.
This aberrated pupil is then masked with a random mask---drawn \emph{only once} at the beginning of the experiment---generated by sampling from a uniform distribution.
Then, by Fourier transform of the pupil, we obtain the system point spread function (PSF) and convolve it with the cameraman image to obtain the measurement.
Finally, we add Gaussian noise to the measurement and PSF, and use a Wiener filter to deconvolve the noisy image with the noisy PSF.

The simulation is repeated with and without the mask, using sphere and astigmatism as the base aberrations. The aberration strength and noise levels are varied from low to high. The results are displayed in Fig.~\ref{fig:recons} along with SSIM scores.
The main takeaway of this experiment is that the reconstruction without the mask is heavily dependent on the aberration-noise combination, whereas the reconstruction with the mask is solely dependent on the noise, regardless of the aberration level.
Thus, for a fixed noise level, the reconstruction with the mask is nearly identical and has similar SSIM scores for all aberration levels and types. 
This is in stark contrast to the reconstruction without the mask, which degrades severely with aberration level.

\section{Discussion}
The primary contribution of this manuscript is the discovery that wavefront randomization can remove the dependence of an imaging system on its aberrations.
Specifically, by using a random pupil mask, the transfer function of an aberrated system is transformed into a random transfer function whose distribution is independent of the aberrations.
Moreover, this random transfer function has desirable properties for use in deconvolution such as being concentrated around its mean and rarely having zeros.
Within a certain noise regime, these random transfer functions allow for better deconvolution, even with severe aberrations.

The next logical step in this inquiry is to specify a practical imaging regime in which wavefront randomization is beneficial.
Real-life experiments are needed to capture a variety of variables beyond the scope of simple simulations and determine whether this method has utility.

On the theoretical side, there are still many open questions about the properties of the random transfer functions and whether there are superior mask distributions for particular imaging conditions. Further, providing analytic high-probability lower bounds on the MTF, would be of interest; since the exact MTF distribution is known, it may also be possible to design better recovery algorithms by leveraging these statistics in a nonparametric maximum likelihood model.
The resampling and combination of multiple random masks in order to improve image reconstruction is also an interesting topic.
Additionally, assumption of shift-invariance played a major role in both theory and simulations since it provides the simple relationships between the pupil, MTF, and PSF; it is true however that many highly-aberrated systems are actually shift-varying. 
But, by dividing a shift-varying system into isoplanatic patches---a common existing strategy---the theory can be easily extended to shift-varying systems as well.

It is also worth noting that the theory in this manuscript is done in 1-dimensional discrete-time, but the results can be extended to 2-dimensional and even continuous time in a straightforward manner. 
Finally, we believe the randomization of other optical elements, and the analysis of their effect on the distribution of the resulting reconstruction, is an important avenue for future research under the paradigm of wavefront randomization.

\section*{ACKNOWLEDGMENTS}       
A.K. was funded by the Berkeley Fellowship for Graduate Study. 
A.N.A. was supported by the Berkeley Fellowship for Graduate Study and the National Science Foundation Graduate Research Fellowship Program under Grant No. DGE 1752814. Any opinions, findings, and conclusions or recommendations expressed in this material are those of the author(s) and do not necessarily reflect the views of the National Science Foundation. 
This material is based upon work supported by the Air Force Office of Scientific Research under award number FA9550-22-1-0521. 
This publication has been made possible in part by CZI grant DAF2021-225666 and grant DOI https://doi.org/10.37921/192752jrgbn from the Chan Zuckerberg Initiative DAF, an advised fund of Silicon Valley Community Foundation (funder DOI 10.13039/100014989).

\bibliographystyle{unsrt}
\bibliography{report.bib}
\appendix
\section{Binary mask}
\label{appendix:binary-mask}
This section will expand on the theoretical development initialized in Sec.~\ref{subsec:binary-mask}. Accompanying simulations of binary mask MTFs are displayed in Fig.~\ref{fig:theory-binary} As a brief review, insertion of a mask, $W_n \overset{i.i.d.}{\sim} \pi \mathrm{Bern}(0.5)$, into the Fourier plane of an imaging system yields a MTF
\begin{equation}
        H_n \eqd \frac{\sqrt{C(N,n)}}{\lfloor N/2 \rfloor} \sqrt{1 + \frac{2a^\top U}{C(N,n)}},
\end{equation}
where $C(N,n) = \lfloor N/2 \rfloor - n$ and $a$ contains the aberrations as per Eq.~\eqref{eq:a-aberration-term}. Though this quantity is not completely aberration independent, it has a weak dependence on aberrations. This is an empirical observation illustrated in Fig.~\ref{fig:theory-binary}. There is also theoretical motivation for such a claim: the second moment of $\bar{H}_n$ is aberration independent. To see this, first consider the 2nd moment of 
$\bar{H}_n$ for a generally Bernoulli parameter (not necessarily 0.5).
\begin{figure}[ht]
   \centering
   \includegraphics[width=0.8\textwidth]{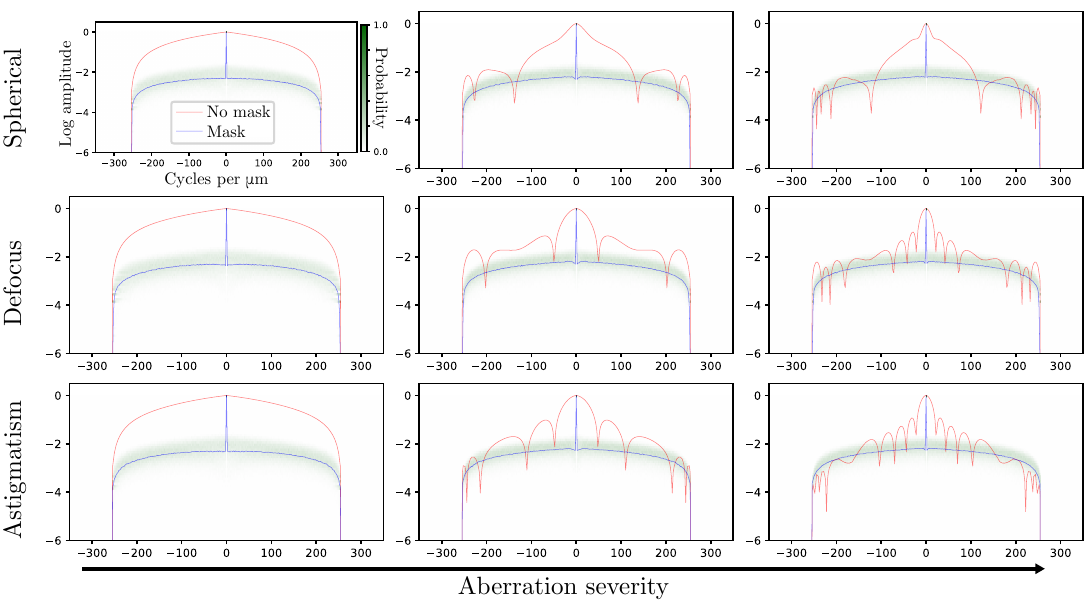}
   \caption{Simulation of MTFs with and without a binary random mask. Each row represents a different aberration type, and each column represents a different aberration strength. 
   Within each individual plot is the MTF of the system with no mask (red), the empirical distribution of MTFs from many draws of a binary mask, and the average MTF of those draws (blue). The MTF distribution and average from binary masks do change with aberration type and strength, but very little. 
   Moreover, the binary mask MTF distribution is concentrated around the average, signifying that the MTF is reliably null-free.}
   \label{fig:theory-binary}
\end{figure}

\begin{theorem}
    Consider the pupil function $\tilde{P_j}$, which has been masked by a Bernoulli-$p$ phase mask. Then for $n = 0, \dots, \lfloor N/2\rfloor - 1$ its autocorrelation $H_n$ has a 2nd moment

    \begin{align*}
        \mathbb{E}[H_n^2] &=  \mathbbm{1}\{n=0\} + \mathbbm{1}\{n>0\} \Biggr[\left(\frac{1}{\lfloor N/2\rfloor}-\frac{n}{\lfloor N/2\rfloor^2}\right) \\ &+ \left(\frac{(1-2p)^2}{\lfloor N/2\rfloor^2}\right) \left(\sum_{j=n}^{\lfloor N/2\rfloor - n - 1} e^{i(2\phi_j - \phi_{j-n} - \phi_{j+n})} +  \sum_{j=2n}^{\lfloor N/2\rfloor - 1}e^{i(\phi_j - 2\phi_{j-n}  + \phi_{j-2n}})\right) 
         \\ &+ \left(\frac{(1-2p)^4}{\lfloor N/2\rfloor^2}\right) \biggr(\sum_{j=n}^{\lfloor N/2\rfloor - 1}\sum_{\substack{k \neq j \\ k \neq j \pm n}} e^{ i (\phi_j - \phi_{j-n} - \phi_k + \phi_{k-n})}\biggr) \Biggr].
    \end{align*}
    \label{thm:binary-squared-mtf}
\end{theorem}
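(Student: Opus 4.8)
The plan is to expand the squared modulus into a double sum and to evaluate the expectation term by term, grouping the summands according to how the four index offsets collide. Writing the masked autocorrelation explicitly, using $A_m = \ind{0 \le m \le \lfloor N/2\rfloor - 1}$ and the (phase-mask-invariant) normalization $\|P_n\|^2 = \lfloor N/2\rfloor$, the MTF is
\begin{equation*}
H_n = \frac{1}{\lfloor N/2\rfloor}\left|\sum_{m=n}^{\lfloor N/2\rfloor - 1} e^{i(\phi_m - \phi_{m-n})}\,e^{i(W_m - W_{m-n})}\right|,
\end{equation*}
so that with $\psi_m := \phi_m - \phi_{m-n}$ and $\delta_m := W_m - W_{m-n}$ one has $H_n^2 = \frac{1}{\lfloor N/2\rfloor^2}\sum_{m,k} e^{i(\psi_m - \psi_k)}\,e^{i(\delta_m - \delta_k)}$, with $m,k$ both ranging over $\{n,\dots,\lfloor N/2\rfloor - 1\}$. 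The case $n=0$ is immediate: there $\psi_m = \delta_m = 0$ for every $m$, hence $H_0 \equiv 1$ and $\mathbb{E}[H_0^2]=1$, which is the $\ind{n=0}$ term. I would dispatch this first and treat $n>0$ for the remainder.

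The key observation is that $e^{iW_j} = (-1)^{B_j}$ with $B_j \overset{i.i.d.}{\sim}\mathrm{Bern}(p)$, so $e^{iW_j}\in\{\pm1\}$, $\mathbb{E}[e^{iW_j}] = 1-2p$, and $(-1)^{2B_j}=1$. Because signs are irrelevant modulo two, $e^{i(\delta_m - \delta_k)} = (-1)^{B_m + B_{m-n} + B_k + B_{k-n}}$, and by independence its expectation equals $(1-2p)^{r}$, where $r$ is the number of indices in the multiset $\{m,\,m-n,\,k,\,k-n\}$ that occur an \emph{odd} number of times; any index occurring twice contributes a squared factor equal to $1$. The entire computation thus reduces to classifying the pairs $(m,k)$ by their collision pattern.

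For $n\ge 1$ there are exactly three regimes. On the diagonal $m=k$ all four indices pair up, the expectation is $1$, the aberration phase $e^{i(\psi_m-\psi_k)}=1$, and summing the $\lfloor N/2\rfloor - n$ such terms gives $\frac{1}{\lfloor N/2\rfloor} - \frac{n}{\lfloor N/2\rfloor^2}$. When $k = m+n$ or $k = m-n$ exactly one pair of indices collides, leaving two distinct indices and the factor $(1-2p)^2$; substituting $\psi_m - \psi_{m+n} = 2\phi_m - \phi_{m-n} - \phi_{m+n}$ and $\psi_m - \psi_{m-n} = \phi_m - 2\phi_{m-n} + \phi_{m-2n}$ and reading off the admissible ranges (both $m$ and its partner must lie in the aperture, forcing $m \le \lfloor N/2\rfloor - n - 1$ in the first and $m \ge 2n$ in the second) yields the two sums in the $(1-2p)^2$ bracket. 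Finally, when $k\notin\{m,\,m\pm n\}$ all four indices are distinct, the expectation is $(1-2p)^4$, and the phase is $e^{i(\phi_m - \phi_{m-n} - \phi_k + \phi_{k-n})}$, producing the constrained double sum. Collecting the three contributions and factoring out $1/\lfloor N/2\rfloor^2$ gives the claimed expression.

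The one genuinely delicate step is the collision bookkeeping. I would verify that for $n\ge 1$ the four cases $m=k$, $k=m+n$, $k=m-n$, and ``all distinct'' are disjoint and exhaustive, and—crucially—that in each single-collision regime \emph{no hidden third coincidence} occurs (e.g.\ $m-n \ne m+n$ since $n\ge1$), which would otherwise corrupt the power of $(1-2p)$. The precise endpoints of the two single-collision sums, namely that the $2\phi_m$ sum terminates at $\lfloor N/2\rfloor - n - 1$ and the $-2\phi_{m-n}$ sum begins at $2n$, follow from demanding that both colliding partners stay in $\{n,\dots,\lfloor N/2\rfloor - 1\}$; these are the places where I would be most careful (and where empty ranges for large $n$ are automatically handled).
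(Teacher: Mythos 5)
Your proposal is correct and follows essentially the same route as the paper's proof: represent the binary phase factors as $\pm 1$ random variables with mean $1-2p$, expand the squared modulus into a double sum, and classify the cross terms by index collisions (diagonal, $k = j \pm n$, all distinct) to read off the powers of $(1-2p)$. Your parity-counting rule ($(1-2p)^r$ with $r$ the number of indices of odd multiplicity) is just a unified phrasing of the paper's case-by-case independence argument, and your endpoint bookkeeping for the two single-collision sums matches the paper exactly.
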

Now, when $p = 0.5$, we see that all aberration dependent terms vanish.
\begin{corollary}[Aberration invariance: squared MTF of binary mask]
    \label{thm:binary-squared-mtf-0.5}
    Consider the setting of Theorem~\ref{thm:binary-squared-mtf} with $p=0.5$. Then,
    \begin{equation}
        \E\left[H_n^2\right] = \frac{C(N,n)}{\lfloor N/2 \rfloor}.
    \end{equation}
\end{corollary}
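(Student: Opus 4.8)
The plan is to read the corollary off of Theorem~\ref{thm:binary-squared-mtf}, the general Bernoulli-$p$ second-moment formula, by specializing to $p = 1/2$. The essential structural feature of that theorem is that the second moment $\E[H_n^2]$ decomposes into three pieces: a single aberration-\emph{independent} leading term, a group of sums that carry all of the aberration phases $\phi_j$ and are prefactored by $(1-2p)^2$, and a further aberration-dependent group prefactored by $(1-2p)^4$. The first observation I would record is that \emph{every} occurrence of the phases $\phi_j$ in the formula lives inside one of the latter two groups, so the entire dependence of the moment on the unknown aberrations is gated behind a positive power of $(1-2p)$.

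With this in hand, the proof is essentially a one-line substitution: setting $p = 1/2$ gives $1 - 2p = 0$, so both $(1-2p)^2$ and $(1-2p)^4$ vanish and annihilate all of the aberration-dependent sums simultaneously, \emph{irrespective of} $\phi$. The only surviving contribution is the aberration-free leading term, which after routine simplification (using $C(N,n) = \lfloor N/2\rfloor - n$) yields the stated value $C(N,n)/\lfloor N/2\rfloor$; in the degenerate case $n = 0$ the separate indicator $\mathbbm{1}\{n=0\}$ supplies $\E[H_0^2] = 1 = C(N,0)/\lfloor N/2\rfloor$, so the two cases combine into the single claimed expression.

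Conceptually, $p = 1/2$ is the distinguished value because there the phasor $e^{iW_j}$, equal to $+1$ with probability $1-p$ and $-1$ with probability $p$, becomes a symmetric $\pm 1$ variable with $\E[e^{iW_j}] = 1 - 2p = 0$. In the moment computation behind Theorem~\ref{thm:binary-squared-mtf}, the phases $\phi_j$ appear only in the cross terms of $|\sum_m \tilde P_m \tilde P^*_{m-n}|^2$ in which some pupil index is visited an odd number of times; each such term therefore acquires a factor of $\E[e^{iW_j}]$, which is exactly what $(1-2p)$ encodes and which is zero at $p = 1/2$. Only the diagonal terms $m = m'$, where every index is visited an even number of times and the aberration phase cancels as $|e^{i(\phi_m - \phi_{m-n})}|^2 = 1$, survive. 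There is essentially no obstacle beyond Theorem~\ref{thm:binary-squared-mtf} itself, which I may assume; the single point deserving care is \emph{completeness}, namely confirming that the partition of index pairs $(m,m')$ into the diagonal class and the odd-coincidence classes is exhaustive, so that no aberration-dependent term can slip through without a $(1-2p)$ prefactor.
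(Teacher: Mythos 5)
Your route is the paper's own: the paper gives no separate proof of this corollary beyond the one-sentence observation that at $p=0.5$ every aberration-dependent term of Theorem~\ref{thm:binary-squared-mtf} vanishes, and your substitution $1-2p=0$ annihilating the $(1-2p)^2$ and $(1-2p)^4$ groups is exactly that argument. Your conceptual gloss (each cross term with an odd-multiplicity index picks up a factor $\E[e^{i\pi B_j}]=1-2p$, so only the diagonal terms survive at $p=1/2$) also matches the case analysis inside the paper's proof of that theorem.

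The one place your write-up fails is precisely the step you wave off as ``routine simplification.'' For $n>0$, setting $p=1/2$ in Theorem~\ref{thm:binary-squared-mtf} leaves the leading term
\begin{equation}
    \frac{1}{\lfloor N/2\rfloor}-\frac{n}{\lfloor N/2\rfloor^2} = \frac{\lfloor N/2\rfloor - n}{\lfloor N/2\rfloor^2} = \frac{C(N,n)}{\lfloor N/2\rfloor^2},
\end{equation}
which differs from the corollary's claimed $C(N,n)/\lfloor N/2\rfloor$ by a factor of $\lfloor N/2\rfloor$; meanwhile your $n=0$ check gives $1=C(N,0)/\lfloor N/2\rfloor$, which matches the \emph{unsquared} denominator. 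So the two cases do not ``combine into the single claimed expression'': the theorem and the corollary, as stated, are mutually inconsistent in their normalization. (A cross-check via Theorem~\ref{thm:binary-mtf}, whose normalized MTF satisfies $H_n^2 \eqd \frac{C(N,n)}{\lfloor N/2\rfloor^2}\bigl(1+\frac{2a^\top U}{C(N,n)}\bigr)$ with $\E[U]=0$, confirms that the second moment of the normalized MTF is $C(N,n)/\lfloor N/2\rfloor^2$ for $n>0$.) This appears to be a typo in the paper rather than an error original to you, but a blind proof should surface it rather than assert that the algebra closes when it does not; the genuine subtlety sits here, not in the completeness of the index-pair partition, which you flagged as the only delicate point.
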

The 2nd moment of $H_n$ or the average squared MTF is a constant and independent of any aberrations, signifying that the concentration properties of the MTF about its expectation are favorable.



\section{Proofs of Main Results}
\label{proofs-main}
Below are proofs of the theorems shown in the main manuscript, refer to Appendix~\ref{appendix:lemmas} for supporting lemmas.
\begin{proof}[Proof of Theorem~\ref{thm:uniform-mtf}]

    The first statement directly follows from Lemma~\ref{lem:translation-invariance-uniform}.
    By Lemma~\ref{lem:pupil-to-MTF},  
    \begin{equation}
        H_n = \frac{1}{\lfloor N/2 \rfloor} \left| \sum\limits_{j=n}^{N-1} e^{i(\phi_j - \phi_{j-n} + W_j - W_{j-n})}\right|
    \end{equation}
    where $C(N,n) = \lfloor N/2 \rfloor - n$.
    Two applications of Lemma~\ref{lem:translation-invariance-uniform} gives
    \begin{equation}
        H_n \eqd \frac{1}{\lfloor N/2 \rfloor} \left| \sum\limits_{j=n}^{N-1} e^{i(W_j - W_{j-n})}\right|,
    \end{equation}
    which is succinctly written in the theorem statement.
    This completes the proof.
\end{proof}

\begin{proof}[Proof of Theorem~\ref{thm:binary-mtf}]
We only consider the MTF for $n= 1, \dots, \lfloor \frac{N}{2} \rfloor - 1$ since it is symmetric. We can write the unnormalized MTF as 
\begin{equation}
        \label{eq:intermediate-MTF-rademacher}
        \bar{H}_n = \left | \sum_{j=0}^{N-1} \tilde{P}_j\tilde{P}^\ast_{j-n} \right | =  \left | \sum_{j=0}^{N-1} A_n^2 e^{i(\phi_j - \phi_{j-n})}e^{i\pi B_j}e^{-i\pi B_{j-n}} \right | = \left | \sum_{j=n}^{\lfloor N/2\rfloor - 1} e^{i(\phi_j - \phi_{j-n})} R_j R_{j-n} \right |,
\end{equation}
where now the Bernoulli $B_j$ random variables have probability $0.5$ and by Lemma~\ref{lem:rademacher-phase} the corresponding $R_j$ are Rademacher random variables.

Thus, we can write the MTF as
\begin{align}
        \bar{H}_n &=  \sqrt{ \left| \sum_{j=n}^{\lfloor N/2\rfloor - 1} e^{i(\phi_j - \phi_{j-n})} R_j R_{j-n} \right|^2} \\
        &= \sqrt{(\lfloor N/2 \rfloor - n) + \sum_{j=n}^{\lfloor N/2 \rfloor - 1}\sum_{k \in \{n, \ldots, j-1, j+1, \lfloor N/2 \rfloor -1\}} R_jR_{j-n}R_kR_{k-n}e^{i(\phi_j - \phi_{j-n} - \phi_k + \phi_{k-n})}} \\
        &= \sqrt{C(N,n)}\left(\sqrt{1 + \frac{1}{C(N,n)}Z}\right),
\end{align}

where $C(N,n)=\lfloor N/2 \rfloor - n$ and $Z = \sum_{j=n}^{\lfloor N/2\rfloor - 1}\sum_{k \in \{n, \ldots, j-1, j+1, \lfloor N/2 \rfloor -1\}} R_jR_{j-n}R_kR_{k-n}e^{i(\phi_j - \phi_{j-n} - \phi_k + \phi_{k-n})}$.

Let the summands of $Z$ be $Z_{j,k} = R_jR_{j-n}R_kR_{k-n}e^{i\Delta^{\phi}_{j,k}}$ where $\Delta^{\phi}_{j,k} = \phi_j - \phi_{j-n} - \phi_k + \phi_{k-n}$, and define the index set 
\begin{equation}
    \cJ = \{ (j,k) : j \in \{n, \ldots, \lfloor N/2 \rfloor \} \} \text{ and } k \in \{n, \ldots, \lfloor N/2 \rfloor \}\setminus \{j\}.
\end{equation}
Consider the distribution of the summand with index $(j,k) \in \cJ$, i.e., $z_{j,k}$.
Because $j \neq k$, $R_j \indep R_k$, so $R_jR_k$ is equal in distribution to a Rademacher random variable.
There are two further cases.
\paragraph{ (1) $\mathbf{j \neq k-n}$ \textbf{and} $\mathbf{k \neq j-n}$} In this case, $R_jR_kR_{j-n}R_{k-n} \sim \mathrm{Rad}$ because all are clearly independent.
\paragraph{ (2) $\mathbf{j = k-n}$ \textbf{or} $\mathbf{k = j-n}$} In the first sub-case,  case, $R_j R_{k-n} = 1$ (deterministically), so $R_j R_k R_{j-n} R_{k-n} = R_k R_{j-n} \sim \mathrm{Rad}$ by independence. The second sub-case is similar.

The conclusion of the above cases is that $Z_{j,k} \eqd R_{j,k} e^{i\Delta^{\phi}_{j,k}}$ for some Rademacher random variable $R_{j,k}$.
What is the dependency structure of these Rademachers?
The following cases illuminate the question:
\paragraph{ \textbf{(1)} $\mathbf{(j',k') = (k,j)}$}. We have that $R_{j,k} = R_{k,j}$; they are deterministically equal.
\paragraph{ \textbf{(2)} $\mathbf{j=k-n}$ \textbf{and not (1)} } It can be verified exhaustively that $R_{j,k} \mid R_{j',k'} \sim \mathrm{Rad}$, which implies independence.

Knowing this dependence structure, we can re-express $Z$ as
\begin{equation}
    Z \eqd \sum_{j=n}^{\lfloor N/2\rfloor - 1}\sum_{k > j} R_{j,k} \left( e^{i\Delta^{\phi}_{j,k}} + e^{-i\Delta^{\phi}_{j,k}}\right) = 2\sum_{(j,k) \in \cI_>} R_{j,k} \cos \left(\Delta^{\phi}_{j,k}\right).
\end{equation}
Above, the $>$ in the index of the sums and the splitting of the exponential are due to the fact that $R_{j,k} = R_{k,j}$ and $e^{i\Delta^{\phi}_{j,k}} = e^{-i\Delta^{\phi}_{k,j}}$.
The index set $\cI_>$ is all pairs appearing in the earlier sums; $|\cI_>| = C(N,n)(C(N,n) - 1)$.
It is now worth noting that $Z$ has gone from being complex-valued to real-valued, and that the $R_{j,k} \in \cI_{>}$ are independent.

First, consider the case $n = \lfloor N/2\rfloor-1$. Here, $Z \overset{a.s.}{=} 0$, which implies $\E[ \bar{H}_n ] = \sqrt{C(N,n)} = 1$.

Now consider a vector $U$ uniformly distributed over the vertices of the hypercube
\begin{equation}
    \cube = \pm \mathbbm{1}_{|\cI_>|}.
\end{equation}
We can write, by Lemma~\ref{lem:prod-rad-unif},
\begin{equation}
    Z \eqd 2a^{\top} U,
\end{equation}
where $a$ is the $|\cI_>|$-length vector $\left(\left|\cos(\Delta^{\phi}_{j,k})\right|\right)_{(j,k) \in \cI_>}$.
Thus, the full distribution of the (normalized) MTF is $H_n \eqd \frac{\sqrt{C(N,n)}}{\lfloor N/2 \rfloor}\sqrt{1+\frac{2a^\top U}{C(N,n)}}$, which proves the first theorem statement.

Now we are ready to calculate $\E\left[\sqrt{1+\frac{Z}{C(N,n)}}\right]$ manually.
The expectation is equal to
\begin{equation}
    \sum\limits_{u \in \cube} \sqrt{1 + \frac{2 a^\top u}{C(N,n)}}\P(U=u) =  \frac{1}{2^{|\cI_>|}} \sum_{u \in \cube} \sqrt{1+\frac{2a^\top u}{C(N,n)}}.
\end{equation}
As a sum of concave functions, this is concave (and non-constant) in $a$.
Thus, its minimum occurs at a corner point; but which one?

We can see that, for all $q \in [|\cI_{>}|]$, letting $\cI_{>}^{-q} = \cI_{>}\setminus\{q\}$,
\begin{equation}
    \sum_{u \in \cube} \sqrt{1+\frac{2a^\top u}{C(N,n)}} = \sum_{u \in \cube_{\cI_{>}^{-q}}} \left(\sqrt{1+\frac{2a_{\cI_{>}^{-q}}^\top u}{C(N,n)} + \frac{2a_q}{C(N,n)}} + \sqrt{1+\frac{2a_{\cI_{>}^{-q}}^\top u}{C(N,n)} - \frac{2a_q}{C(N,n)}}\right).
\end{equation}
Each summand in the above display is of the form
\begin{equation}
    \sqrt{1+\gamma+\frac{2a_q}{C(N,n)}} + \sqrt{1+\gamma-\frac{2a_q}{C(N,n)}}.
\end{equation}
But the minimizing value of $a_q$ over the domain $[0,1]$, uniformly over all feasible values of $\gamma$, is $a_q=1$.
Similarly, the maximizer is $a_q=0$.
(This can be verified by taking derivatives, or alternatively, by plotting this function.)
Thus,
\begin{align}
    &\arg\min_{a_q} \sum_{u \in \cube_{\cI_{>}^{-q}}} \left(\sqrt{1+\frac{2a_{\cI_{>}^{-q}}^\top u}{C(N,n)} + \frac{2a_q}{C(N,n)}} + \sqrt{1+\frac{2a_{\cI_{>}^{-q}}^\top u}{C(N,n)} - \frac{2a_q}{C(N,n)}}\right) \\
    = &\arg\min_{a_q} \sqrt{1+\gamma+\frac{2a_q}{C(N,n)}} + \sqrt{1+\gamma-\frac{2a_q}{C(N,n)}} = 1.
\end{align}
This directly implies that $a=\mathbbm{1}_{|\cI_>|}$ minimizes the sum, since it is a feasible point.


In summary, the expected value of the normalized MTF can be calculated as
\begin{equation}
    \E[H_n] = \begin{cases}
        \frac{\sqrt{C(N,n)}}{\lfloor N/2 \rfloor2^{|\cI_>|}} \sum_{u \in \cube} \sqrt{1+\frac{2a^\top u}{C(N,n)}} & n = 1, \ldots, \lfloor \frac{N}{2} \rfloor - 2\\
        \frac{1}{\lfloor N/2 \rfloor} & n = \lfloor \frac{N}{2} \rfloor - 1
    \end{cases}.
\end{equation}
It can furthermore be lower-bounded by
\begin{equation}
    \E[H_n] \geq \begin{cases}
        \frac{\sqrt{C(N,n)}}{\lfloor N/2 \rfloor2^{|\cI_>|}} \sum_{u \in \cube} \sqrt{\left(1+\frac{2\mathbbm{1}^\top u}{C(N,n)}\right)_+} & n = 1, \ldots, \lfloor \frac{N}{2} \rfloor - 2\\
        \frac{1}{\lfloor N/2 \rfloor} & n = \lfloor \frac{N}{2} \rfloor - 1
    \end{cases},
\end{equation}
where $(x)_+ = x\ind{x \geq 0}$.
\end{proof}

\begin{proof}[Proof of Theorem~\ref{thm:binary-squared-mtf}]
    Recall our setting: we want to find the expected square magnitude of the autocorrelation of 
    \begin{equation}
        \tilde{P}_j = A_je^{i\phi_j}e^{i\pi B_j}.
    \end{equation}
    Since the autocorrelation sequence is $N$ periodic and symmetric about $n=0$, we will solve it only for $n = 0, \dots, \lfloor \frac{N}{2} \rfloor - 1$. For convenience, we will omit the square of the normalization factor $\frac{1}{\lfloor N/2\rfloor}$ till the end. Writing out the unnormalizaed autocorrelation sequence for these values gives
    \begin{equation}
        \bar{H}_n = \left | \sum_{j=0}^{N-1} \tilde{P}_j\tilde{P}^\ast_{j-n} \right | =  \left | \sum_{j=0}^{N-1} A_n^2 e^{i(\phi_j - \phi_{j-n})}e^{i\pi B_j}e^{-i\pi B_{j-n}} \right | = \left | \sum_{j=n}^{\lfloor N/2\rfloor - 1} e^{i(\phi_j - \phi_{j-n})} R_j R_{j-n} \right |,
    \end{equation}
    where the $R_j = e^{i\pi B_j}$ are i.i.d. as per Lemma~\ref{lem:rademacher-phase}.
    Next, we take the expected square of this expression to get
    \begin{equation}
        \mathbb{E}\left|\sum_{j=n}^{\lfloor N/2\rfloor - 1} e^{i(\phi_j - \phi_{j-n})} R_j R_{j-n}\right|^2 = \sum_{j=n}^{\lfloor N/2\rfloor - 1} \mathbb{E}[(R_jR_{j-n})^2] + \sum_{j=n}^{\lfloor N/2\rfloor - 1}\sum_{k \neq j} \mathbb{E}[R_jR_{j-n}R_kR_{k-n}]e^{i(\phi_j - \phi_{j-n} - \phi_k + \phi_{k-n})},
    \end{equation}
    
    where we utilize the expanded form of a product of sums and apply linearity of expectation. Since each $R_j$ can only be $1$ or $-1$, $R_jR_{j-n}^2 = 1$ deterministically. Thus, $\mathbb{E}[(R_jR_{j-n})^2] = \mathbb{E}[1] = 1$, and so the first term simplifies to 
    \begin{equation}
        \sum_{j=n}^{\lfloor N/2\rfloor - 1} \mathbb{E}[(R_jR_{j-n})^2] = \lfloor N/2\rfloor - n.
    \end{equation}

    Now we will decompose the second term (the cross terms) into a variety of cases based on the sum indices. Doing so will allow us to compute all of the expectations. Note that, the normalized autocorrelation sequence is normalized by it's value at $n=0$ such that $\tilde{H}_0 = 1$. Consequently, we will only consider the following cases when $n>0$ and remedy this in the final expression. 

    \noindent \emph{Case 1: $k\neq j \pm n$}.
    In this case, none of the indices in the expectation overlap, which allows us to invoke independence and separate the terms. The expectation of $R_j$, from lemma~\ref{lem:rademacher-phase}, is $\mathbb{E}[R_j] = (1-p) - p = (1-2p)$. Combining these facts gives
    \begin{equation}
        \mathbb{E}[R_jR_{j-n}R_kR_{k-n}] = \mathbb{E}[R_j]\mathbb{E}[R_{j-n}]\mathbb{E}[R_k]\mathbb{E}[R_{k-n}] = (1-2p)^4.
    \end{equation}
    Thus, the full expression of the second term for terms under this case becomes
    \begin{equation}
          (1-2p)^4\sum_{j=n}^{\lfloor N/2\rfloor -1}\sum_{\substack{k \neq j \\ k \neq j \pm n}} e^{i(\phi_j - \phi_{j-n} - \phi_k + \phi_{k-n})}.
    \end{equation}

    \noindent \emph{Case 2: $k = j + n$}.
    Considering only terms for which $k = j + n$, yields the following expression

    \begin{equation}
        \sum_{j=n}^{\lfloor N/2\rfloor - n - 1} \mathbb{E}[R_j^2]\mathbb{E}[R_{j-n}R_{j+n}]e^{i(2\phi_j- \phi_{j-n} - \phi_{j+n})} = (1-2p)^2\sum_{j=n}^{\lfloor N/2\rfloor - n -1} e^{i(2\phi_j- \phi_{j-n} - \phi_{j+n})}
    \end{equation}

    \noindent \emph{Case 3: $k = j - n$}.
    Similarly, considering only terms for which $k = j - n$, yields the following expression

    \begin{equation}
        \sum_{j=2n}^{\lfloor N/2\rfloor - 1} \mathbb{E}[R_{j-n}^2]\mathbb{E}[R_{j}R_{j-2n}]e^{i(\phi_j- 2\phi_{j-n} - \phi_{j-2n})} = (1-2p)^2\sum_{j=2n}^{\lfloor N/2\rfloor - 1} e^{i(\phi_j- 2\phi_{j-n} - \phi_{j-2n})}
    \end{equation}

Having covered all cases, we can assemble ours result along with the squared normalization factor $\frac{1}{\lfloor N/2\rfloor ^2}$ to arrive at the final expression.
\end{proof}

\section{Technical lemmas}
\label{appendix:lemmas}
\begin{lemma}
    \label{lem:pupil-to-MTF}
    In the setting of~\eqref{eq:pupil-masked}, we have that
    \begin{equation}
        H_n = \frac{1}{\lfloor N/2 \rfloor}\left| \sum\limits_{j=n}^{N-1} e^{i(\phi_j - \phi_{j-n} + W_j - W_{j-n})}\right|.
    \end{equation}
\end{lemma}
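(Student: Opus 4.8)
The plan is to substitute the masked pupil $\tilde{P}_n = A_n e^{i(\phi_n + W_n)}$ from~\eqref{eq:pupil-masked} directly into the definition of the MTF in~\eqref{eq:mtf-def} and simplify. First I would compute the normalization $\|\tilde P_n\|^2 = \sum_{m=0}^{N-1} |\tilde P_m|^2 = \sum_{m=0}^{N-1} A_m^2$. Since each phasor $e^{i(\phi_m + W_m)}$ has unit modulus and $A_m = 1$ for $m \in \{0, \ldots, \lfloor N/2\rfloor - 1\}$ while $A_m = 0$ otherwise, this sum evaluates to $\lfloor N/2\rfloor$, independently of $n$. This supplies the prefactor $\tfrac{1}{\lfloor N/2\rfloor}$ in the claimed expression (and, as a sanity check, it is the same normalization that produces the Cauchy--Schwarz bound~\eqref{eq:mtf-cs}).

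Next I would expand the autocorrelation numerator, combining the amplitude factors and the conjugation into
\begin{equation}
    \sum_{m=0}^{N-1} \tilde{P}_m \tilde{P}^*_{m-n} = \sum_{m=0}^{N-1} A_m A_{m-n}\, e^{i(\phi_m - \phi_{m-n} + W_m - W_{m-n})} .
\end{equation}
The key step is to identify which summands survive: the factor $A_m A_{m-n}$ equals $1$ exactly when both $m$ and the shifted index $m-n$ (reduced modulo $N$ by periodicity) lie in the support $\{0, \ldots, \lfloor N/2\rfloor - 1\}$, and it vanishes otherwise. For $0 \le n \le \lfloor N/2\rfloor - 1$ I would argue this forces $m \in \{n, \ldots, \lfloor N/2\rfloor - 1\}$: when $m \ge n$ the index $m-n$ remains inside the support, whereas when $m < n$ periodicity sends $m-n$ to $m - n + N \in \{N-n, \ldots, N-1\}$, which lies outside the support, killing the term. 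Reindexing the surviving terms with $j = m$ and taking the modulus then yields the stated formula, with the genuinely nonzero range being $j \in \{n, \ldots, \lfloor N/2\rfloor - 1\}$, i.e.\ $C(N,n) = \lfloor N/2\rfloor - n$ terms.

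The main obstacle I anticipate is the careful bookkeeping of the aperture support under the modulo-$N$ index convention --- in particular, justifying rigorously that every term with $j \ge \lfloor N/2\rfloor$ contributes nothing, so that the upper summation limit written as $N-1$ in the statement is effectively $\lfloor N/2\rfloor - 1$. I would handle this cleanly by retaining the $A_j A_{j-n}$ factors inside the sum until the very end, or equivalently by stating at the outset that the sum is understood to range only over indices where both amplitude factors equal one; the remaining manipulations are routine simplifications of unit-modulus phasors.
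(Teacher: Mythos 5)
Your proposal is correct and takes essentially the same approach as the paper, whose proof is simply the one-line remark that the lemma follows by direct substitution into~\eqref{eq:mtf-def}; you have filled in the details of that substitution. Your careful bookkeeping of the aperture support under the modulo-$N$ convention (showing the nominal upper limit $N-1$ is effectively $\lfloor N/2\rfloor - 1$, leaving $C(N,n)$ unit-modulus terms) is exactly the verification the paper leaves implicit.
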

\begin{proof}
    This can be verified by directly plugging into~\eqref{eq:mtf-def}.
\end{proof}

\begin{lemma}[Translation invariance of uniform phasors.]
    \label{lem:translation-invariance-uniform}
    Let $U \sim \mathrm{Unif}(0, 2\pi)$ and consider any $\phi \in \R$. Then
    \begin{equation}
        e^{i(\phi + U)} \eqd e^{iU}.
    \end{equation}
\end{lemma}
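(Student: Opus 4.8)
The plan is to prove the equality in distribution by comparing the two circle-valued random variables $e^{i(\phi+U)}$ and $e^{iU}$ through their action on an arbitrary bounded test function, thereby reducing the whole statement to the elementary fact that integrating a $2\pi$-periodic function over a full period is unaffected by a fixed translation of its argument. The conceptual picture is simply that $e^{iU}$ is a uniformly random point on the unit circle, and a uniform law on the circle is invariant under the rotation by $\phi$ that $e^{i\phi}\cdot$ induces.

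First I would fix an arbitrary bounded measurable function $f$ on the unit circle and set $g(\theta) = f(e^{i\theta})$, which is $2\pi$-periodic since the complex exponential is. Writing out the expectation against the uniform density of $U$ gives $\E[f(e^{i(\phi+U)})] = \frac{1}{2\pi}\int_0^{2\pi} g(\phi+\theta)\,d\theta$. The key step is then the substitution $u = \phi+\theta$, which turns this into $\frac{1}{2\pi}\int_\phi^{2\pi+\phi} g(u)\,du$; because $g$ is $2\pi$-periodic, the integral over any interval of length $2\pi$ equals $\frac{1}{2\pi}\int_0^{2\pi} g(u)\,du = \E[f(e^{iU})]$. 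Since $f$ was arbitrary, the two random variables have the same law, which is exactly $e^{i(\phi+U)} \eqd e^{iU}$.

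An even slicker alternative I would keep in mind is to characterize the law of $e^{iU}$ on the circle by its Fourier coefficients $\E[e^{ikU}]$, $k \in \mathbb{Z}$, which vanish for $k \neq 0$ and equal $1$ for $k=0$. Since $\E[e^{ik(\phi+U)}] = e^{ik\phi}\,\E[e^{ikU}]$ agrees with these for every $k$, and the Fourier coefficients determine a probability measure on the circle, equality in distribution follows immediately; this routes around any explicit integration at the cost of invoking a uniqueness theorem.

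The main obstacle here is conceptual rather than computational: one must be careful that the statement concerns the induced laws of the \emph{circle-valued} quantities $e^{i\theta}$, so that the modular wrap-around of the angle is handled correctly, and it is precisely the periodicity of $g$ (equivalently, of the exponential) that renders the fixed shift $\phi$ invisible to the integral. No genuine estimate is required; the only care needed is to phrase ``equal in distribution'' at the level of $S^1$ and to justify the translation invariance of the period integral cleanly.
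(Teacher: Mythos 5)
Your proposal is correct, and it is a sound formalization of the same underlying fact the paper uses, but the mechanics differ enough to be worth comparing. The paper's proof is a two-line argument at the level of the angle itself: it observes that $\phi + U \sim \mathrm{Unif}(\phi, 2\pi+\phi)$, asserts that this wrapped modulo $2\pi$ is again $\mathrm{Unif}(0,2\pi)$, and concludes since $e^{i\theta}$ depends only on $\theta \bmod 2\pi$. You instead work at the level of laws on the circle: you test both random variables against an arbitrary bounded measurable $f$, change variables, and invoke the translation invariance of the period integral of the $2\pi$-periodic function $g(\theta)=f(e^{i\theta})$. In effect, your change-of-variables computation is precisely the proof of the step the paper leaves unjustified (that $U' \bmod 2\pi \eqd U$), so your version is more self-contained and makes the measure-theoretic meaning of $\eqd$ for circle-valued variables explicit, at the cost of being longer; the paper's version is terser but leans on that wrapped-uniform fact as known. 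Your Fourier-coefficient alternative is a genuinely third route --- identifying the law by $\E[e^{ik(\phi+U)}] = e^{ik\phi}\E[e^{ikU}] = \E[e^{ikU}]$ for all $k\neq 0$ --- which is the slickest of the three but imports a uniqueness theorem for measures on $S^1$ that the elementary arguments do not need.
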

\begin{proof}
    Define $U' \sim \mathrm{Unif}(\phi, 2\pi + \phi)$.
    Note that $U'\mod2\pi \eqd U$.
    Thus $e^{i(\phi + U)} \eqd e^{i(U'\mod(2\pi))} \eqd e^{iU}$.
\end{proof}

\begin{lemma}
    \label{lem:rademacher-phase}
    Let $B$ be a Bernoulli random variable with probability parameter $p$. Then random variable $R \overset{\Delta}{=} e^{i\pi B} = e^{-i\pi B} $ is specified by
    \begin{equation}
        R = \begin{cases}
            1 & \text{w.p.} \ 1-p \\
            -1 & \text{w.p.} \ p 
        \end{cases}
    \end{equation}
    Note that $R$ here is a Rademacher random variable when $p = 0.5$.
\end{lemma}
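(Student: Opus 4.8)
The plan is to prove this by direct case analysis, exploiting the fact that a Bernoulli random variable is supported on the two-point set $\{0,1\}$. First I would recall the definition: $B = 1$ with probability $p$ and $B = 0$ with probability $1-p$. Since $R = e^{i\pi B}$ is a deterministic (measurable) function of $B$, its law is completely pinned down by evaluating this function at each atom of $B$ and transferring the corresponding probability mass. So it suffices to compute the exponential at the two admissible values.

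Next I would apply Euler's formula $e^{i\theta} = \cos\theta + i\sin\theta$. At $B = 0$ we get $e^{i\pi\cdot 0} = e^0 = 1$, and at $B = 1$ we get $e^{i\pi} = \cos\pi + i\sin\pi = -1$. Pairing these outputs with the masses from the previous step gives $R = 1$ with probability $1-p$ and $R = -1$ with probability $p$, which is exactly the claimed specification. To justify the stated identity $e^{i\pi B} = e^{-i\pi B}$, I would note that both outputs $\pm 1$ are real and hence self-conjugate; equivalently $e^{-i\pi} = \cos(-\pi) + i\sin(-\pi) = -1 = e^{i\pi}$, so the two exponentials agree at $B = 1$ and trivially at $B = 0$. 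Finally, substituting $p = 0.5$ yields $R = \pm 1$ each with probability $1/2$, which is the definition of a Rademacher random variable.

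There is no substantive obstacle here: the argument is a two-point case check together with a single application of Euler's formula. The only point requiring a moment of care is verifying that the sign-flipped exponential $e^{-i\pi B}$ returns the same two values as $e^{i\pi B}$, and this is immediate because the outputs lie in $\R$.
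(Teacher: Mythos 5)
Your proof is correct and follows essentially the same approach as the paper's: a two-point case check evaluating $e^{i\pi B}$ at $B=0$ and $B=1$ and transferring the Bernoulli probabilities. Your additional verification that $e^{i\pi B}=e^{-i\pi B}$ (via self-conjugacy of $\pm 1$) is a small, welcome completion of a step the paper leaves implicit.
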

\begin{proof}
Since $B$ is Bernoulli, it can only be either $1$ or $0$. In the case that it is $1$, $R = e^{i\pi B} = e^{i\pi} = -1$, which happens with probability $p$. In the case that it is $0$, $R = e^{i\pi B} = e^{0} = 1$, which happens with probability $1-p$.
\end{proof}

\begin{lemma}
    \label{lem:prod-rad-unif}
    Let $R_0, \dots, R_{M}$ be jointly independent Rademacher random variables. Then for $n>0$ 
    $\begin{bmatrix} R_{n}R_{0} \\ R_{n+1}R_{1} \\ \vdots \\ R_{M}R_{M-n}   \end{bmatrix}$ is uniformly distributed on the hypercube.
\end{lemma}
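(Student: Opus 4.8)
The plan is to show that the length-$(M-n+1)$ vector $Y = (Y_n, \ldots, Y_M)$ with entries $Y_j = R_j R_{j-n}$ is jointly uniform on $\{-1,1\}^{M-n+1}$, which for $\pm 1$-valued vectors is equivalent to mutual independence of the entries (each then being marginally Rademacher for free). The cleanest route is through the characters of the Boolean cube: a $\{-1,1\}$-valued random vector of length $k$ is uniform if and only if $\E\!\left[\prod_{j \in S} Y_j\right] = 0$ for every nonempty index set $S$. This is immediate from the Fourier expansion $\P(Y = y) = 2^{-k}\sum_{S} \E\!\left[\prod_{j \in S} Y_j\right]\prod_{j \in S} y_j$, since all nonempty characters vanishing is exactly what forces $\P(Y = y)\equiv 2^{-k}$. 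I would state this characterization first, either with a one-line inversion argument or a citation.

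Next I would compute the character expectations directly from the joint independence of the underlying $R_i$. For a nonempty $S \subseteq \{n, \ldots, M\}$, regrouping by the raw index gives
\[
\prod_{j \in S} Y_j = \prod_{j \in S} R_j R_{j-n} = \prod_{i=0}^{M} R_i^{\,c_i(S)}, \qquad c_i(S) = \ind{i \in S} + \ind{i+n \in S},
\]
where out-of-range memberships count as zero. Because the $R_i$ are independent and $\E[R_i^{\,c}]$ equals $1$ when $c$ is even and $0$ when $c$ is odd (using $R_i^2 = 1$ and $\E[R_i] = 0$), we obtain $\E\!\left[\prod_{j \in S} Y_j\right] = \prod_i \E[R_i^{\,c_i(S)}]$, which vanishes as soon as a single exponent $c_i(S)$ is odd.

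The crux is therefore a purely combinatorial claim: every nonempty $S$ produces at least one odd exponent. Here I would use a minimal-element argument. Let $j_0 = \min S$ and set $i = j_0 - n$, a valid raw index since $j_0 \ge n$. Then $\ind{i + n \in S} = \ind{j_0 \in S} = 1$, while $\ind{i \in S} = \ind{j_0 - n \in S} = 0$ because $j_0 - n < j_0$ and $j_0$ is the smallest element of $S$. Hence $c_i(S) = 1$ is odd, so the corresponding character expectation is $0$. Combined with the characterization above, this yields $\P(Y = y)\equiv 2^{-(M-n+1)}$, i.e. uniformity on the hypercube.

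I expect the main obstacle to be bookkeeping rather than depth: one must handle boundary indices carefully in the definition of $c_i(S)$ (contributions with $i < n$ or $i > M$ are absent) and treat $S$ strictly as a set so that no entry $Y_j$ is repeated in the product. If a self-contained derivation of the character characterization is preferred over a citation, that short Fourier-inversion step is the only other piece that needs spelling out.
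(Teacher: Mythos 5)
Your proof is correct, but it takes a genuinely different route from the paper's. The paper proceeds by induction on $M$ with a chain-rule conditioning argument: writing $\P(U=u)$ as a product of conditional probabilities, it observes that the last entry $R_M R_{M-n}$ contains the factor $R_M$, which appears in no earlier entry, so even conditionally on all previous entries it remains Rademacher; iterating gives $\P(U=u) = 0.5^{M-n+1}$. You instead verify the character criterion for uniformity on the Boolean cube: for every nonempty $S \subseteq \{n,\ldots,M\}$ the product $\prod_{j\in S} R_j R_{j-n}$ contains the raw variable $R_{\min S - n}$ with exponent exactly one (your minimal-element argument; the symmetric choice $R_{\max S}$ would work equally well), so the character expectation vanishes, and Fourier inversion forces the law to be uniform. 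Both proofs ultimately isolate the same fact---some raw Rademacher occurs an odd number of times in the relevant product---but they package it differently: the paper's conditioning argument is more elementary and needs no Fourier apparatus, whereas yours avoids the inductive bookkeeping (where the paper's own write-up is in fact somewhat loose with indices) and generalizes immediately, since it shows joint uniformity for any family of signed products whose index sets are linearly independent over $\mathbb{F}_2$, the staircase sets $\{j, j-n\}$ being one instance. One small caution: your parenthetical remark that uniformity of a $\pm 1$-valued vector is ``equivalent to mutual independence of the entries'' is imprecise as stated, since independence alone does not force uniform marginals (constant entries are independent); in your setting each $R_jR_{j-n}$ is marginally Rademacher anyway, and in any case this remark is not load-bearing---your actual argument runs entirely through the character criterion, which you state and apply correctly.
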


\begin{proof}
    
    For convenience, set $M = \lfloor N/2\rfloor - 1$. Let $U = \begin{bmatrix} R_{n}R_{0} \\ R_{n+1}R_{1} \\ \vdots \\ R_{M}R_{M-n}   \end{bmatrix}$. We will proceed by induction on $M$ for $n = 1, \dots, M-1$. The case of $n=m$ follows since $U$ will only have a single element $R_MR_0$ which is uniform on the hypercube because $\mathbb{P}(R_MR_0 = \pm 1) = 0.5$. For the base case, let $M=2$, then the only relevant case is $n=1$ for which $U = \begin{bmatrix} R_{1}R_{0} \\ R_{2}R_{1} \end{bmatrix}$. Note that 
    \begin{align}
    \mathbb{P}(U = u) = \mathbb{P}(U_{2} = u_2 \mid U_{1} = u_1)\mathbb{P}(U_{1} = u_1) &= \mathbb{P}(R_{2}R_{1} = u_2 \mid R_{1}R_{0} = u_1)\mathbb{P}(R_{1}R_{0} = u_1) \\ &= \mathbb{P}(R_{2}R_{1} = u_2 \mid R_{1}R_{0} = u_1)0.5.
    \end{align}
    Now $R_{2}R_{1}$ will only take values $1$ or $-1$ and it will do so with equal probability regardless of the value of $R_{2}R_{1}$ because $R_2$ is unaffected. Thus $\mathbb{P}(R_{2}R_{1} = \pm 1 \mid R_{1}R_{0} = u_1)0.5 = 0.5^2$ and $U$ is uniform on the hypercube.
    
    Now let's assume that for an arbitrary $M$ and $n \in \{1,...M-1\}$ we have 
    \begin{align}
        \mathbb{P}(U_{1:M-n-1} = u_{1:M-n-1}) = \mathbb{P}(R_nR_0 = u_1)\mathbb{P}(R_n+1R_1 = u_2) \dots \mathbb{P}(R_{M-1}R_{M-n-1} = U_{M-n-1}) = 0.5^{M-n-1}.
    \end{align}
    Then, using the law of total probability, we can write 
    \begin{align}
        \mathbb{P}(U = u) &= \mathbb{P}(U_{M-n} = u_{M-n}  \mid U_{1:M-n-1} = u_{1:M-n-1})\mathbb{P}(U_{1:M-n-1} = u_{1:M-n-1}) \\ &= \mathbb{P}(R_{M}R_{M-n} = u_{M-n} \mid U_{1:M-n-1} = u_{1:M-n-1})0.5^{M-n-1}. 
    \end{align}
    Now by a similar logic as the base case, fixing all of the elements of $U_{1:M-n-1}$ will not effect the outcome of $R_M$. Thus we have 
    \begin{equation}
        \mathbb{P}(R_{M}R_{M-n} = \pm 1 \mid U_{1:M-n-1} = u_{1:M-n-1}) = 0.5 = \mathbb{P}(R_{M}R_{M-n} = \pm 1),
    \end{equation}
    and so 
    \begin{equation}
        \mathbb{P}(R_{M}R_{M-n} = u_{M-n} \mid U_{1:M-n-1} = u_{1:M-n-1})0.5^{M-n-1} = 0.5^{M-n}
    \end{equation}
    which means that that $U$ is uniform on the hypercube.
\end{proof}

\end{document}